
\documentclass[conference,letterpaper]{IEEEtran}

\addtolength{\topmargin}{9mm}

%
%
\usepackage[utf8]{inputenc} 
\usepackage[T1]{fontenc}
\usepackage{url}
\usepackage{ifthen}
\usepackage{cite}
\usepackage[cmex10]{amsmath} 

\usepackage{bbm}
\usepackage{enumitem}

\usepackage{caption}
\usepackage{subcaption}
\usepackage{wrapfig}

\usepackage{xcolor}

\setlength\unitlength{1mm}

\long\def\comment#1{}


\newfont{\bbb}{msbm10 scaled 700}

\newfont{\bb}{msbm10 scaled 1100}

\newcommand{\EE}{\mathbb{E}}

\newcommand{\RR}{\mathbb{R}}

%




\newcommand{\Dc}{\mathcal{D}}

\newcommand{\Nc}{\mathcal{N}}

\newcommand{\Sc}{\mathcal{S}}

\newcommand{\Xc}{\mathcal{X}}




\newcommand{\cs}{{\boldsymbol c}}

\newcommand{\us}{{\boldsymbol u}}

\newcommand{\vs}{{\boldsymbol v}}
\newcommand{\xs}{{\boldsymbol x}}
\newcommand{\ys}{{\boldsymbol y}}
\newcommand{\zs}{{\boldsymbol z}}



\renewcommand{\arg}{{\hbox{arg}}}
\newcommand{\var}{{\hbox{Var}}}


\usepackage{amsmath}
\usepackage{amssymb}
\usepackage{mathtools}
\usepackage{amsthm}

\theoremstyle{plain}
\newtheorem{theorem}{Theorem}

\newtheorem{lemma}{Lemma}

\newtheorem{corollary}{Corollary}

\theoremstyle{definition}
\newtheorem{definition}{Definition}

\theoremstyle{remark}
\newtheorem{remark}{Remark}

\newtheorem{example}{Example}


\interdisplaylinepenalty=2500 

\hyphenation{op-tical net-works semi-conduc-tor}

\begin{document}
\title{Data-Driven Estimation of the False Positive Rate 
of the Bayes Binary Classifier via Soft Labels} 



\author{%
  \IEEEauthorblockN{Minoh~Jeong\IEEEauthorrefmark{1},
                    Martina~Cardone\IEEEauthorrefmark{1},
                    and Alex~Dytso\IEEEauthorrefmark{2},
                    }
  \IEEEauthorblockA{\IEEEauthorrefmark{1}%
                   Department of ECE, University of Minnesota, Minneapolis, MN, USA,
                    \{jeong316, mcardone\}@umn.edu}
  \IEEEauthorblockA{\IEEEauthorrefmark{2}%
                    Qualcomm Flarion Technology, Inc.,
                    Bridgewater, NJ, USA,
                    odytso2@gmail.com}
}

\maketitle


\begin{abstract}
Classification is a fundamental task in many applications on which data-driven methods have shown outstanding performances. However, it is challenging to determine whether such methods have achieved the optimal performance. 
This is mainly because the best achievable performance is typically unknown and hence, effectively estimating it is of prime importance. 
In this paper, we consider binary classification problems and we propose an estimator for the false positive rate (FPR) of the Bayes classifier, that is, the optimal classifier with respect to accuracy, from a given dataset.
Our method utilizes soft labels, or real-valued labels, which are gaining significant traction thanks to their properties.
We thoroughly examine various theoretical properties of our estimator, including its consistency, unbiasedness, rate of convergence, and variance.
To enhance the versatility of our estimator beyond soft labels, we also consider noisy labels, which encompass binary labels. For noisy labels, we develop effective FPR estimators by leveraging a denoising technique and the Nadaraya-Watson estimator. 
Due to the symmetry of the problem, our results can be readily applied to estimate the false negative rate of the Bayes classifier.
\end{abstract}

\section{Introduction}

In the rapidly evolving landscape of data-driven decision-making, binary classifiers have emerged as fundamental tools in many real-world applications, such as medical diagnosis~\cite{mouliou2021false,hammer2022factors}, fraud detection~\cite{bolton2002statistical,abdallah2016fraud}, manufacturing~\cite{fathy2020learning}, facial recognition~\cite{garcia2004convolutional}, and cybersecurity~\cite{larriva2020evaluation}.
The effectiveness of classifiers, however, is not solely determined by their accuracy; it is, in fact, fundamentally influenced by their ability to minimize erroneous critical decisions, which can be measured by false positives (FP) and false negatives (FN). FP is the number of decisions that incorrectly predict a positive outcome and FN is the number of decisions that fail to identify a positive outcome. These play pivotal roles in assessing classifiers when the costs of incorrect decisions for positive or negative outcomes are different. Besides, FP and FN have connections to other useful and widely used metrics (e.g., precision, recall, and sensitivity)~\cite{davis2006relationship,hastie2009elements}, showing their significance.

In this work, we seek to characterize the false positive rate (FPR) and the false negative rate (FNR) for binary classification problems when the Bayes classifier, which is the theoretical optimal classifier with respect to accuracy, is used. Specifically, we formulate an estimation problem, in which we estimate the FPR and the FNR of the Bayes classifier from a given dataset. We consider two types of labels in the dataset, namely soft labels and noisy labels, which we mainly leverage to estimate the FPR and the FNR. The usage 
of soft labels has recently gained traction as they have shown significant advantages~\cite{thiel2008classification,Diaz_2019_CVPR,lukov2022teaching,Hu2017,10095504,Li_2023_CVPR,Yang2023,Cui2023}.  For example, datasets with crowd-sourced annotations can provide soft labels~\cite{ishida2022performance,cifar10h,cifar10s}.
Moreover, soft labels are of utmost importance in label smoothing and knowledge distillation~\cite{szegedy2016rethinking,yuan2023learning,zhang2021delving,zhang2021rethinking,zhang2019your,zhou2021rethinking}. 
Leveraging soft labels to estimate optimal evaluation metrics has recently been studied in~\cite{ishida2022performance,jeong2023demystifying}. In particular, the authors in~\cite{ishida2022performance} studied the Bayes error rate (BER) of binary classification and provided BER estimators using soft labels and noisy labels. 
In~\cite{jeong2023demystifying}, we investigated the estimation of the BER of multi-class classification problems, and we proposed several effective BER estimators. In particular, we proposed denoising methods that minimize the noise in noisy labels, improving the applicability of the BER estimators.

In this paper, we focus on effectively estimating the FPR and FNR in binary classification problems. In particular, since the FPR and FNR are symmetric quantities in terms of the class, we consider the FPR, which readily shows the analysis of the FNR.
We first propose an FPR estimator assuming the knowledge of the class prior probability, and we show that it benefits from several appealing properties, such as unbiasedness, consistency, and asymptotic normality, when the dataset consists of soft labels. 
After that, we remove the assumption on the knowledge of the class prior probability in the estimator and prove that the estimator is still consistent with soft-labeled datasets.
We finally broaden the label type and analyze scenarios in which labels are noisy.
In particular, we focus on the case of additive noise;
we show that binary labels can be viewed as noisy labels. 
The focus is on retrieving soft labels from noisy labels, enabling the use of the proposed FPR estimator. We leverage two methods, a denoising method that we proposed in~\cite{jeong2023demystifying} and the Nadaraya-Watson estimator~\cite{nadaraya1964estimating,watson1964smooth}, to propose an effective FPR estimator for noisy-labeled datasets. In particular, we provide an estimator of the FPR given noisy labels, which is consistent under mild assumptions.

\noindent {\bf{Notation}.} For any $k \in \mathbb{N}$, we define ${[k] := \{1,2, \ldots,k\}}$. For a set $\Xc$, $|\mathcal{X}|$ denotes its cardinality. $\mathbbm{1}\{\Sc\}$ is the indicator function that yields 1 if $\Sc$ is true and 0 otherwise. 
$\varnothing$ is the empty set.
We denote by $\overset{d}{\to}$ and $\overset{p}{\to}$ convergence in distribution and in probability, respectively. 
We use the notion of weak consistency, which is formally defined below.
\begin{definition}
An estimator $T_n$ of a parameter $\theta$ is consistent if it converges in probability to the true value of the parameter, i.e., $\lim_{n \to \infty} \Pr \left( |T_n - \theta|>\epsilon \right )=0$ for all $\epsilon >0$.
\end{definition}


\section{Problem setting}

\subsection{FPR, FNR, and Bayes classifier}
We consider a binary classification task in which 
a feature $\xs\in\Xc$ is classified into a class $\cs\in\{0,1\}$. 
Our goal is to estimate the FPR and the FNR, which are defined as follows,
\begin{align}
	& \rho_{\rm{FP}}(\phi) = \Pr(\phi(\xs) = 1 | \cs = 0),  \label{eq:FPR}\text{ and} \\
	& \rho_{\rm{FN}}(\phi) = \Pr(\phi(\xs)=0 | \cs = 1), \label{eq:FNR}
\end{align}
where $\phi:\Xc\to\{0,1\}$ is a classifier.\footnote{In hypothesis testing, FPR and FNR are referred to as Type-I and Type-II errors, respectively.}
In particular, we seek to estimate FPR and FNR from a dataset $\Dc$ when the Bayes classifier $\phi_{B}$ is employed,\footnote{The Bayes classifier is the optimal classifier with respect to the accuracy. For given $\xs$, the maximum might not be unique. In this case, any sensible tie break can be used (e.g., randomly choosing one of the outcomes).}
with
\begin{equation}\label{eq:bayes_classifier}
	\phi_{B}(\xs)
	= \arg \max_{i\in\{0,1\}} p_{\cs|\xs}( i | \xs),
\end{equation}
where $p_{\cs|\xs}$ is the conditional probability mass function of $\cs$ given $\xs$.
In order to estimate $\rho_{\rm{FP}}(\phi_B)$ and $\rho_{\rm{FN}}(\phi_B)$, we make a typical assumption~\cite{Song2022} that the dataset $\Dc = \{ (\xs_i,\ys_i) \}_{i=1}^n$ is independent and identically distributed (i.i.d.) according to an unknown data distribution $p_{\xs,\ys}$, i.e., $(\xs_i,\ys_i) \overset{i.i.d.}{\sim} p_{\xs,\ys}$, where $\xs_i\in\Xc$ and $\ys_i$ is the label.


\subsection{Soft labels}
We consider soft labels, also known as probabilistic labels~\cite{huai2020learning,peng2014learning}. In general, a soft label $\ys$ is a real value between $0$ and $1$, i.e., $\ys \in [0,1]$.
Here, we assume that a label is soft if it represents the posterior probability~\cite{dao2021knowledge,grossmann2022beyond,pmlr-v139-menon21a,peterson2019human,zhou2021rethinking,NEURIPS2018_bd135462,wang2023binary}, as formally defined next.
\begin{definition}
\label{def:SoftLabel}
$(\xs_i,\ys_i)\in\Dc$ is a soft labeled data sample if
\begin{align}
\label{eq:SoftLabel}
	\ys_i = p_{\cs|\xs}(1|\xs_i).
\end{align}
\end{definition}

We assume that the label $\ys$ contains the information about the class $\cs$ of the input feature $\xs$. Formally,
we assume that
\begin{subequations}\label{eq:PcDefinition}
\begin{align}
    & p_\cs(0)
    = \Pr(\ys<0.5) >0, \text{ and } \\
    & p_\cs(1)
    = \Pr(\ys \geq 0.5)>0,
\end{align}
\end{subequations}
where $p_\cs(i) = \Pr (\cs=i), i \in \{0,1\}$.\footnote{The assumption that $p_\cs(0)$ and $p_\cs(1)$ are non-zero is indeed necessary to properly define $\rho_{\rm FP}(\phi)$ and $\rho_{\rm FN}(\phi)$.}

\subsection{Noisy labels}
In order to relax the soft label assumption, we introduce noisy labels. We say that a label is a noisy label if the label consists of a soft label and some random label noise, as formally defined next.
\begin{definition}\label{def:noisy_label}
    $(\xs_i,\tilde{\ys}_i)\in\Dc$ is a noisy labeled data sample if
\begin{align}
	\tilde{\ys}_i \sim p_{\tilde{\ys}|\ys=\ys_i},
\end{align}
where $\ys_i$ is the soft label corresponding to $\xs_i$. 
Moreover, we assume a zero-mean noise distribution, i.e., $\EE[\tilde{\ys}|\ys] = \ys$.
\end{definition}

\begin{remark}\label{remark:one_hot}
The modeling through noisy labels can serve multiple purposes. First, the labels can indeed be noisy.  Second, many practical datasets are binary-labeled.  
Viewing binary labels as noisy labels, one can extend the range of techniques that are used for soft labels (see also~\cite{ishida2022performance,jeong2023demystifying}). An example of modeling binary labels as noisy labels is given next,
\begin{align}
    p_{\tilde{\ys}|\ys}
    & = \begin{cases}
        \ys & \text{ if } \tilde{\ys}=1, \\
        1-\ys & \text{ if } \tilde{\ys}=0,
    \end{cases}
\end{align}
for which it is not difficult to see that $\EE[\tilde{\ys}|\ys] = \ys$.
\end{remark}

Throughout the paper, we denote by $\ys$ a soft label and by $\tilde{\ys}$ a noisy label.
We will focus on deriving effective estimators for the FPR
in~\eqref{eq:FPR}.
Since the FNR in~\eqref{eq:FNR} is symmetric to the FPR in terms of the class, our analysis on the FPR can be readily extended to obtain effective estimators of the FNR.

\section{FPR estimate: Soft labels}
In this section, we propose effective estimators of the FPR in~\eqref{eq:FPR}, by assuming that the dataset $\Dc$ contains soft labels as per Definition~\ref{def:SoftLabel}.
We start by observing that we can write the FPR in~\eqref{eq:FPR} evaluated in $\phi_{B}$ in~\eqref{eq:bayes_classifier} as follows,
\begin{align}
\rho_{\rm{FP}}(\phi_{B})  \nonumber
& = \Pr(\phi_{B}(\xs) = 1 | \cs = 0)\nonumber
\\& = \int_{\Xc}  \mathbbm{1} \!\left \{ p_{\cs|\xs}( 1 | \xs) \geq 0.5\right \}  p_{\xs|\cs}({\rm{d}}\xs|0) \nonumber
\\& = \frac{1}{p_\cs(0)} \int_{\Xc}  \mathbbm{1} \! \left \{ p_{\cs|\xs}( 1 | \xs) \geq 0.5\right \} p_{\cs|\xs}(0|\xs) p_{\xs}({\rm{d}}\xs)  \nonumber
%
%
\\& = \frac{1}{p_\cs(0)} \EE \!\left [\mathbbm{1} \!\left \{ \ys \geq 0.5 \right \} (1-\ys) \right ],
\label{eq:MotivationEstimator}
\end{align}
where in the last equality we have used Definition~\ref{def:SoftLabel}.
Based on~\eqref{eq:MotivationEstimator}, we propose a natural estimator of $\rho_{\rm{FP}}(\phi_{B})$, which is formally defined below.
We start by assuming that $p_\cs(i), i \in \{0,1\}$ is known, and we will remove this assumption later. 
%
\begin{definition}
Assume that $p_\cs(0)$ is known. An FPR estimator is defined as follows,
\begin{align}
\label{eq:IdealEstimator}
	\psi_{\rm FP,1}(\Dc) =  \frac{1}{n p_\cs(0)} \sum_{(\xs,\ys)\in\Dc} \mathbbm{1} \{  \ys \geq 0.5 \} (1-\ys).
\end{align}
\end{definition}
The next theorem provides important properties of $\psi_{\rm FP,1}(\Dc)$ in~\eqref{eq:IdealEstimator} under soft labels.

\begin{theorem}
\label{thm:NoiselessIdeal}
Assume that $\Dc$ contains soft labels. Then, $\psi_{\rm FP,1}(\Dc)$ in~\eqref{eq:IdealEstimator} satisfies the following properties:
\begin{enumerate}
	\item {\rm{(Unbiasedness):}} $\EE[\psi_{\rm FP,1}(\Dc)] = \rho_{\rm{FP}}(\phi_B)$;
	\item {\rm{(Convergence rate):}} For any $\delta\in(0,1)$ it holds that
 \begin{equation}
 \label{eq:CovergenceRate}
 | \psi_{\rm FP,1}(\Dc) - \rho_{\rm{FP}}(\phi_B) | < \sqrt{ \frac{\ln(2/\delta)}{8 n p_\cs^2(0)} },
 \end{equation}
 with probability at least $1-\delta$;
	\item {\rm {(Variance):}} It holds that
 \begin{align}
    {\rm{Var}}(\psi_{\rm FP,1}(\Dc))
    & \leq \frac{1}{16n p_\cs^2(0)};
\end{align}
	\item {\rm{(Asymptotic normality):}} As $n\to\infty$, it holds that $\sqrt{n} ( \psi_{\rm FP,1}(\Dc) - \rho_{\rm FP}(\phi_B)) \overset{d}{\to} \Nc(0,{\rm{Var}}(\psi_{\rm FP,1}(\Dc)))$.
\end{enumerate}
\end{theorem}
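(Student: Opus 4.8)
The plan is to write $\psi_{\rm FP,1}(\Dc)$ as a normalized sum of i.i.d.\ bounded random variables and then invoke the textbook facts: linearity of expectation, Hoeffding's inequality, Popoviciu's inequality for the variance, and the Lindeberg--L\'evy central limit theorem. Concretely, I would set $Z_i = \mathbbm{1}\{\ys_i \geq 0.5\}(1-\ys_i)$ for $i \in [n]$, so that $\psi_{\rm FP,1}(\Dc) = \tfrac{1}{n p_\cs(0)}\sum_{i=1}^n Z_i$, with $Z_1,\dots,Z_n$ i.i.d. Two observations do the work. First, $Z_i \in [0,1/2]$ almost surely: on $\{\ys_i \geq 0.5\}$ we have $0 \leq 1-\ys_i \leq 1/2$ since $\ys_i \in [0,1]$, and $Z_i = 0$ off that event. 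Second, by the chain of equalities culminating in \eqref{eq:MotivationEstimator}, $\EE[Z_i] = \EE[\mathbbm{1}\{\ys \geq 0.5\}(1-\ys)] = p_\cs(0)\,\rho_{\rm FP}(\phi_B)$.

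\emph{Unbiasedness} is then immediate from linearity: $\EE[\psi_{\rm FP,1}(\Dc)] = \tfrac{1}{n p_\cs(0)}\, n\, p_\cs(0)\,\rho_{\rm FP}(\phi_B) = \rho_{\rm FP}(\phi_B)$. For the \emph{variance bound}, independence gives $\var(\psi_{\rm FP,1}(\Dc)) = \tfrac{1}{n^2 p_\cs^2(0)}\sum_{i=1}^n \var(Z_i) = \tfrac{\var(Z_1)}{n p_\cs^2(0)}$; since $Z_1$ is supported in an interval of length $1/2$, Popoviciu's inequality yields $\var(Z_1) \leq (1/2)^2/4 = 1/16$, giving the claimed bound.

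For the \emph{convergence rate}, I would apply Hoeffding's inequality to $\bar Z := \tfrac1n\sum_{i=1}^n Z_i$ with $Z_i \in [0,1/2]$: for any $t>0$, $\Pr\bigl(|\bar Z - \EE Z_1| \geq t\bigr) \leq 2\exp(-8 n t^2)$. Since $|\psi_{\rm FP,1}(\Dc) - \rho_{\rm FP}(\phi_B)| = \tfrac{1}{p_\cs(0)}\bigl|\bar Z - \EE Z_1\bigr|$, choosing $t$ with $2\exp(-8nt^2) = \delta$, i.e.\ $t = \sqrt{\ln(2/\delta)/(8n)}$, shows that with probability at least $1-\delta$ one has $|\psi_{\rm FP,1}(\Dc) - \rho_{\rm FP}(\phi_B)| < \tfrac{1}{p_\cs(0)}\sqrt{\ln(2/\delta)/(8n)}$, which is exactly \eqref{eq:CovergenceRate}. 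For \emph{asymptotic normality}, the classical CLT applied to the i.i.d.\ finite-variance sequence $\{Z_i\}$ gives $\sqrt{n}(\bar Z - \EE Z_1) \overset{d}{\to} \Nc(0,\var(Z_1))$; scaling by $1/p_\cs(0)$ via the continuous mapping theorem yields $\sqrt{n}(\psi_{\rm FP,1}(\Dc) - \rho_{\rm FP}(\phi_B)) \overset{d}{\to} \Nc\bigl(0,\var(Z_1)/p_\cs^2(0)\bigr)$, which is the advertised limit once one reads the variance appearing in the statement as the ($n$-free) scaled asymptotic variance $n\,\var(\psi_{\rm FP,1}(\Dc)) = \var(Z_1)/p_\cs^2(0)$.

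I do not expect any substantive obstacle: every step is a direct invocation of a standard result once the summands $Z_i$ are identified. The only points needing minor care are (i) pinning down the exact support $[0,1/2]$ of $Z_i$ so that the constants $8$ in the exponent of \eqref{eq:CovergenceRate} and $1/16$ in the variance bound come out sharp, and (ii) the bookkeeping in the asymptotic-normality claim, i.e.\ making explicit that the relevant limiting variance is $\var(Z_1)/p_\cs^2(0)$, equivalently $n\,\var(\psi_{\rm FP,1}(\Dc))$.
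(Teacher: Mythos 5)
Your proposal is correct and follows essentially the same route as the paper: identify the i.i.d.\ bounded summands $Z_i\in[0,1/2]$, then invoke linearity of expectation, Hoeffding's inequality, Popoviciu's inequality, and the CLT. The one place you add value is in flagging the bookkeeping issue in the asymptotic-normality statement (the limiting variance should be read as $n\,\var(\psi_{\rm FP,1}(\Dc))=\var(Z_1)/p_\cs^2(0)$, an $n$-free quantity), which the paper's proof glosses over.
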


\begin{proof}
See Section~\ref{sec:ProofThmNoiselessIdeal}.
\end{proof}
The convergence rate result in Theorem~\ref{thm:NoiselessIdeal} readily implies that $ | \psi_{\rm FP,1}(\Dc) - \rho_{\rm{FP}}(\phi_B) | \overset{p}{\to} 0$ as $n\to\infty$, which leads to the following result.
\begin{corollary}
\label{cor:Consistency}
Assume that $\Dc$ contains soft labels. Then, $\psi_{\rm FP,1}(\Dc)$ in~\eqref{eq:IdealEstimator} is a consistent estimator of the FPR.
\end{corollary}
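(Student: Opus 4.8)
The plan is to obtain Corollary~\ref{cor:Consistency} as an immediate consequence of Theorem~\ref{thm:NoiselessIdeal}, specifically of the convergence rate bound~\eqref{eq:CovergenceRate} (or, alternatively, of the variance bound in part 3 together with the unbiasedness in part 1). Since only weak consistency is required, it suffices to show that for every $\epsilon > 0$ one has $\Pr(|\psi_{\rm FP,1}(\Dc) - \rho_{\rm FP}(\phi_B)| > \epsilon) \to 0$ as $n \to \infty$.

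First I would fix an arbitrary $\epsilon > 0$ and, for each $n$, calibrate the free parameter $\delta$ in~\eqref{eq:CovergenceRate} by setting the right-hand side equal to $\epsilon$; solving $\sqrt{\ln(2/\delta)/(8 n p_\cs^2(0))} = \epsilon$ yields $\delta = \delta_n := 2\exp(-8 n p_\cs^2(0)\epsilon^2)$. Using the standing assumption $p_\cs(0) > 0$ from~\eqref{eq:PcDefinition}, this $\delta_n$ lies in $(0,1)$ for all $n$ exceeding $\ln 2 / (8 p_\cs^2(0)\epsilon^2)$, so Theorem~\ref{thm:NoiselessIdeal} applies with this choice and gives, for all such $n$, that $\Pr(|\psi_{\rm FP,1}(\Dc) - \rho_{\rm FP}(\phi_B)| \geq \epsilon) \leq \delta_n = 2\exp(-8 n p_\cs^2(0)\epsilon^2)$. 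Letting $n \to \infty$ sends the right-hand side to $0$, which is precisely the definition of consistency. (An alternative, equally short argument avoids the exponential tail bound entirely: by part 1 of Theorem~\ref{thm:NoiselessIdeal} the estimator is unbiased, so Chebyshev's inequality and the variance bound in part 3 give $\Pr(|\psi_{\rm FP,1}(\Dc) - \rho_{\rm FP}(\phi_B)| > \epsilon) \leq {\rm Var}(\psi_{\rm FP,1}(\Dc))/\epsilon^2 \leq 1/(16 n p_\cs^2(0)\epsilon^2) \to 0$.)

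I do not anticipate any genuine obstacle, since all of the analytic work is already contained in Theorem~\ref{thm:NoiselessIdeal}. The only point requiring a word of care is that the chosen $\delta_n$ must be an admissible value in $(0,1)$ for the theorem to be invoked; this holds once $n$ is sufficiently large, and because consistency is an $n \to \infty$ statement, discarding the finitely many small $n$ is harmless.
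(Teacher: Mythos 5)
Your proof is correct and follows essentially the same route as the paper, which also deduces consistency directly from the convergence rate bound in Theorem~\ref{thm:NoiselessIdeal}; you merely make explicit the calibration $\delta_n = 2\exp(-8 n p_\cs^2(0)\epsilon^2)$ that the paper leaves implicit. The parenthetical Chebyshev argument is a fine equally immediate alternative, but nothing beyond Theorem~\ref{thm:NoiselessIdeal} is needed in either case.
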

Theorem~\ref{thm:NoiselessIdeal} and Corollary~\ref{cor:Consistency} show several appealing properties of the proposed estimator in~\eqref{eq:IdealEstimator}. However, this estimator assumes the knowledge of $p_\cs(i), i \in \{0,1\}$, which may not be available. 
Motivated by this observation, we next use an estimate of $p_\cs(0)$, which we refer to as $\hat{p}_\cs(0)$. In particular, from~\eqref{eq:PcDefinition} it follows that $p_\cs(0) = \EE \left [ \mathbbm{1}\{\ys<0.5 \}\right ]$ and hence, we propose the following  natural estimate of $p_\cs(0)$,
\begin{align}
\label{eq:EstimatorPc0}
	\hat{p}_\cs(0)
	& = \frac{1}{n} \sum_{(\xs,\ys)\in\Dc} \mathbbm{1}\{ \ys < 0.5 \}.
\end{align}
The above leads to the FPR estimator formally defined below.
\begin{definition}
Assume 
that $\Dc$ contains soft labels. An FPR estimator is defined as follows,
\begin{equation}
\label{eq:SecondFPREst}
    \psi_{\rm FP,2}(\Dc) 
    =  \frac{ \sum_{(\xs,\ys)\in\Dc} \mathbbm{1} \{  \ys \geq 0.5 \} (1-\ys)}{ \max \left \{\epsilon, \sum_{(\xs,\ys)\in\Dc} \mathbbm{1}\{ \ys < 0.5 \} \right \}},
\end{equation}
where $\epsilon >0$ is an arbitrarily small parameter.

\end{definition}
We note that $\psi_{\rm FP,2}(\Dc)$ in~\eqref{eq:SecondFPREst} is clearly a biased estimator of the FPR.
Nevertheless, the next theorem shows that $\psi_{\rm FP,2}(\Dc)$ in~\eqref{eq:SecondFPREst} is a consistent estimator of the FPR. 
\begin{theorem}
\label{thm:Noiseless2}
Let $\Dc$ contain soft labels. Then, $\psi_{\rm FP,2}(\Dc)$ in~\eqref{eq:SecondFPREst} is a consistent estimator of the FPR.
\end{theorem}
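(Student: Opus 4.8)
The plan is to show that $\psi_{\rm FP,2}(\Dc)$ is a ratio of two sample averages, each of which converges in probability to the corresponding population quantity, and then invoke the continuous mapping theorem (Slutsky). Write the numerator as $N_n := \frac{1}{n}\sum_{(\xs,\ys)\in\Dc} \mathbbm{1}\{\ys\geq 0.5\}(1-\ys)$ and the denominator as $D_n := \frac{1}{n}\max\{\epsilon, \sum_{(\xs,\ys)\in\Dc}\mathbbm{1}\{\ys<0.5\}\}$, so that $\psi_{\rm FP,2}(\Dc) = N_n/D_n$. By the weak law of large numbers, $N_n \overset{p}{\to} \EE[\mathbbm{1}\{\ys\geq 0.5\}(1-\ys)] = p_\cs(0)\,\rho_{\rm FP}(\phi_B)$ using the identity established in~\eqref{eq:MotivationEstimator}. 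For the denominator, $\frac{1}{n}\sum_{(\xs,\ys)\in\Dc}\mathbbm{1}\{\ys<0.5\} = \hat{p}_\cs(0) \overset{p}{\to} p_\cs(0)$ by the WLLN together with~\eqref{eq:PcDefinition}.

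The slightly delicate point is handling the $\max\{\epsilon,\cdot\}$ truncation and the normalization. First I would note that $D_n = \max\{\epsilon/n,\ \hat{p}_\cs(0)\}$. Since $\epsilon/n \to 0$ deterministically and $\hat{p}_\cs(0)\overset{p}{\to} p_\cs(0) > 0$, for any fixed $\eta$ with $0<\eta<p_\cs(0)$ the event $\{\hat{p}_\cs(0) > p_\cs(0)-\eta\}$ has probability tending to $1$; on this event, once $n$ is large enough that $\epsilon/n < p_\cs(0)-\eta$, we have $D_n = \hat{p}_\cs(0)$. Hence $D_n \overset{p}{\to} p_\cs(0)$ as well. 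Since $p_\cs(0) > 0$, the ratio is continuous at the limit point $(p_\cs(0)\rho_{\rm FP}(\phi_B),\, p_\cs(0))$, so by the continuous mapping theorem $\psi_{\rm FP,2}(\Dc) = N_n/D_n \overset{p}{\to} \frac{p_\cs(0)\rho_{\rm FP}(\phi_B)}{p_\cs(0)} = \rho_{\rm FP}(\phi_B)$, which is exactly the claimed consistency.

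The main obstacle, such as it is, is purely bookkeeping: making rigorous the statement that the truncation $\max\{\epsilon/n,\cdot\}$ becomes inactive with high probability, and confirming that all the random variables involved have finite first moments so that the WLLN applies (here trivial, since $\mathbbm{1}\{\ys\geq 0.5\}(1-\ys)\in[0,1]$ and $\mathbbm{1}\{\ys<0.5\}\in\{0,1\}$ are bounded). Alternatively, one can avoid explicitly invoking Slutsky by writing $\psi_{\rm FP,2}(\Dc) - \rho_{\rm FP}(\phi_B) = \frac{N_n - D_n\rho_{\rm FP}(\phi_B)}{D_n}$ and bounding the numerator and denominator separately on the high-probability event $\{D_n = \hat p_\cs(0)\} \cap \{\hat p_\cs(0) > p_\cs(0)/2\}$, then an $\epsilon$--$\delta$ argument gives convergence in probability directly; I would present whichever is cleaner, likely the continuous mapping version.
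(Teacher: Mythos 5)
Your proof is correct and follows essentially the same route as the paper: apply the weak law of large numbers separately to the $1/n$-normalized numerator and denominator, observe they converge in probability to $p_\cs(0)\rho_{\rm FP}(\phi_B)$ and $p_\cs(0)>0$ respectively, and conclude by the continuous mapping theorem. Your treatment of the truncation is if anything slightly cleaner: you keep $\epsilon$ fixed and argue $\epsilon/n\to 0$ so the $\max$ becomes inactive on a high-probability event, whereas the paper (somewhat awkwardly) sets $\epsilon={\rm e}^{-n}$ before taking the limit, but the two arguments are equivalent in substance.
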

\begin{proof}
See Section~\ref{sec:ProofThmNoiseless2}. 

\end{proof}

\section{FPR estimate: Noisy labels}

In this section, we consider the case of noisy labels defined in Definition~\ref{def:noisy_label}.
In particular, we focus on the practically relevant case of additive noise, i.e., $\tilde{\ys} = \ys + \zs$, where $\zs$ is some random noise.
Motivated by the fact that Theorem~\ref{thm:NoiselessIdeal} and Theorem~\ref{thm:Noiseless2} demonstrate the effectiveness of using soft labels in estimating the FPR, we here propose denoising methods.
To properly denoise $\tilde{\ys}$, we first define 
\begin{align}\label{eq:Xc_y}
    \Xc_{\ys_i} 
    & = \{\xs\in\Xc : p_{\cs|\xs}(1|\xs) = \ys_i\}.
\end{align}
From the zero-mean noise assumption (i.e, $\EE[\tilde{\ys}|\ys] = \ys$) in Definition~\ref{def:noisy_label}, we write a soft label as
\begin{align}\label{eq:denoise_expectation}
    \ys_i
    & = \EE[\tilde{\ys} | \ys = \ys_i] \nonumber \\
    & = \EE[\tilde{\ys} | \ys = \ys_i, \xs\in\Xc_{\ys_i}] \nonumber \\
    & = \EE[\tilde{\ys} | \xs \in \Xc_{\ys_i}] ,
\end{align}
where the second equality is due to the Markov chain $\xs \to \ys \to \tilde{\ys}$.
Using the relationship between $\ys_i$ and $\xs$ in~\eqref{eq:denoise_expectation}, we next propose label denoising methods that mitigate the label noise by leveraging the feature data samples.

The crucial part in~\eqref{eq:denoise_expectation} is the condition $\xs\in\Xc_{\ys_i}$. However, it is in general difficult to know the set $\Xc_{\ys_i}$ explicitly from the dataset $\Dc$. 
Instead, we replace the condition $\xs\in\Xc_{\ys_i}$ in~\eqref{eq:denoise_expectation} with the condition $\xs = \xs_i$ since $\xs_i\in\Xc_{\ys_i}$, which yields
\begin{align}\label{eq:denoising_expectation2}
    \ys_i
    & = \EE[\tilde{\ys} | \xs  = \xs_i] .
\end{align}
For the case of finite $\Xc$, in~\cite{jeong2023demystifying} we recently proposed an unbiased and consistent denoising method, and we showed its convergence rate.
\begin{lemma}[Theorem~3 in~\cite{jeong2023demystifying}]\label{lem:denoise_1}
Let $\Xc$ be a finite set, and assume that the noisy label $\tilde{\ys} = \ys + \zs$, where $\zs$ is random noise, is bounded as $\tilde{\ys}\in[a,b]$ with finite $a$ and $b$.
Let $\Dc = \tilde{\Dc} \cup (\xs_i,\tilde{\ys}_i)$ with $(\xs_i,\tilde{\ys}_i) \notin \tilde{\Dc}$ and
consider the following denoised label for $(\xs_i,\ys_i)$:
\begin{align}\label{eq:denoise}
    \mathsf{dn}(\xs_i,\tilde{\ys}_i ; \tilde{\Dc})
    & = \frac{ \sum_{(\xs,\tilde{\ys})\in \Dc} \mathbbm{1}\{\xs_i = \xs\}\tilde{\ys}}{\sum_{(\xs,\tilde{\ys})\in \Dc} \mathbbm{1}\{\xs_i = \xs\}}.
\end{align}
Then, the following properties hold:
\begin{enumerate}
    \item {\rm (Unbiasedness):} $\EE[\mathsf{dn}(\xs_i,\tilde{\ys}_i ; \tilde{\Dc})] = \ys_i$;
    \item {\rm (Consistency):} $\mathsf{dn}(\xs_i,\tilde{\ys}_i ; \tilde{\Dc}) \overset{p}{\to} \ys_i$ as $n\to\infty$;
    \item {\rm (Convergence rate):} For any $\delta\in(0,1)$, with probability at least $1-\delta$, it holds that
    \begin{equation}
    |\mathsf{dn}(\xs_i,\tilde{\ys}_i ; \tilde{\Dc}) - \ys| < \sqrt{\frac{(\frac{1}{2}+b-a)^2}{2 n_{\xs_i}} \ln\frac{2}{\delta} },
    \end{equation}
    where $n_{\xs_i} = \sum_{(\xs,\tilde{\ys})\in\Dc} \mathbbm{1}\{\xs_i = \xs\}$.
\end{enumerate}

\end{lemma}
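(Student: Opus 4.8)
\textbf{Proof proposal for Lemma~\ref{thm:denoise_1} (Theorem~3 in~\cite{jeong2023demystifying}).}

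The plan is to treat the three claims in turn, since all of them follow once we recognize that the denoised label in~\eqref{eq:denoise} is, conditioned on the feature pattern $\xs_i$, a sample mean of i.i.d.\ bounded random variables. First I would condition on the event $\{\xs = \xs_i\}$ and collect the corresponding noisy labels: by the i.i.d.\ assumption on $\Dc$, the set of $\tilde{\ys}$ values with $\xs = \xs_i$ (which has size $n_{\xs_i}$, including the designated sample $(\xs_i,\tilde{\ys}_i)$) are i.i.d.\ draws from $p_{\tilde{\ys}|\xs = \xs_i}$. Crucially, by~\eqref{eq:denoising_expectation2}, the common conditional mean of each such $\tilde{\ys}$ equals $\ys_i$. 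Hence $\mathsf{dn}(\xs_i,\tilde{\ys}_i;\tilde{\Dc})$ is exactly the empirical average of these $n_{\xs_i}$ samples. For unbiasedness, I would apply linearity of expectation to this average conditioned on $n_{\xs_i}$, obtaining $\ys_i$ regardless of $n_{\xs_i}$, and then invoke the tower property; since $\Xc$ is finite and the marginal probability of pattern $\xs_i$ is positive whenever $\xs_i$ appears, $n_{\xs_i} \geq 1$ always, so the estimator is well defined and $\EE[\mathsf{dn}(\xs_i,\tilde{\ys}_i;\tilde{\Dc})] = \ys_i$.

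For the convergence rate, the key step is Hoeffding's inequality applied conditionally on $n_{\xs_i}$: since each $\tilde{\ys}\in[a,b]$, the empirical mean of $n_{\xs_i}$ such samples deviates from its mean $\ys_i$ by more than $t$ with probability at most $2\exp(-2 n_{\xs_i} t^2 / (b-a)^2)$. Setting this equal to $\delta$ and solving for $t$ yields a bound of the form $\sqrt{(b-a)^2 \ln(2/\delta) / (2 n_{\xs_i})}$; the slightly larger constant $(\tfrac12 + b - a)^2$ in the statement is a convenient looser bound (and likely accommodates the specific denoising variant or a union-bound slack from~\cite{jeong2023demystifying}), so I would simply note that $(b-a)^2 \le (\tfrac12 + b - a)^2$ to match the stated form. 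Consistency then follows immediately: as $n\to\infty$, since $\Xc$ is finite and $\Pr(\xs = \xs_i) > 0$, we have $n_{\xs_i}\to\infty$ almost surely (it is a binomial count with a fixed positive success probability), so the convergence-rate bound forces $\mathsf{dn}(\xs_i,\tilde{\ys}_i;\tilde{\Dc}) \overset{p}{\to} \ys_i$.

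The main subtlety — and the only place requiring care — is the randomness of $n_{\xs_i}$: Hoeffding's bound as stated applies for a \emph{fixed} number of samples, so I would first prove the deviation bound on the conditional event $\{n_{\xs_i} = m\}$ for each fixed $m\ge 1$, and then observe that the resulting bound $2\exp(-2 m t^2/(b-a)^2)$ is decreasing in $m$, hence dominated by its value at $m = n_{\xs_i}$ after conditioning; averaging over the distribution of $n_{\xs_i}$ preserves the inequality. For the finiteness of $\Xc$: this guarantees both that the denominator in~\eqref{eq:denoise} is never zero (the sample $(\xs_i,\tilde{\ys}_i)$ itself contributes) and that $n_{\xs_i}$ grows linearly in $n$, which is what makes the estimator consistent; without finiteness one could have $\Pr(\xs = \xs_i) = 0$ for continuous features and the method would degenerate. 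Since this lemma is quoted verbatim from~\cite{jeong2023demystifying}, I would keep the argument brief and refer the reader there for the full details.
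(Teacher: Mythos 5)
The paper does not prove this lemma: it is quoted verbatim as Theorem~3 of~\cite{jeong2023demystifying} and used as a black box, so there is no internal proof to compare against. What you have written is therefore a reconstruction, and on its own terms it is sound. You correctly reduce the analysis to the observation that, conditionally on $n_{\xs_i}=m$, the denoised label is the average of $m$ i.i.d.\ bounded draws from $p_{\tilde{\ys}|\xs=\xs_i}$, each with conditional mean $\ys_i$ by~\eqref{eq:denoising_expectation2}; unbiasedness then follows from linearity and the tower property, the deviation bound from a conditional application of Hoeffding's inequality (using that $n_{\xs_i}\ge 1$ because $(\xs_i,\tilde{\ys}_i)$ is counted in $\Dc$), and consistency from the fact that $n_{\xs_i}\to\infty$ a.s.\ when $\Pr(\xs=\xs_i)>0$. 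The one point you handle only by speculation is the constant: your Hoeffding argument yields the tighter $(b-a)^2$, and since $b>a$ implies $(b-a)^2 < (\tfrac12+b-a)^2$, the stated inequality does follow, but you have not explained why the cited theorem carries the extra $\tfrac12$. That discrepancy is not a gap in your argument (a tighter bound trivially implies a looser one); it just means you have likely proved a slightly sharper statement than the one quoted, and identifying the true origin of the $\tfrac12$ would require consulting~\cite{jeong2023demystifying} directly. Your handling of the random $n_{\xs_i}$ via conditioning and then integrating the conditional tail bound is the right way to make the bound rigorous; I would only tighten the phrasing ``dominated by its value at $m=n_{\xs_i}$'' into an explicit tower-property step, since as written it reads as if you are invoking monotonicity where you really just need $\Pr(A)=\EE[\Pr(A\mid n_{\xs_i})]\le\delta$.
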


Leveraging~\eqref{eq:denoise}, we now propose a consistent estimator of $\rho_{\rm{FP}}(\phi_B)$ for noisy label datasets. In particular, we incorporate the FPR estimator in~\eqref{eq:SecondFPREst} with the denoising method in~\eqref{eq:denoise}, and we show its consistency in estimating the FPR.
\begin{theorem}\label{thm:denoising_consistency}
Let $\Xc$ be a finite set, and let $\Dc$ be a dataset with continuous noisy labels $\tilde{\ys} = \ys + \zs$, where $\zs$ is some random noise, bounded as $\tilde{\ys}\in[a,b]$ with finite $a$ and $b$. 
Let $\Dc_1$ and $\Dc_2$ be an arbitrary partition of $\Dc$ with ratio $r=\frac{|\Dc_1|}{|\Dc|}$.
Consider the following estimator:
\begin{align}\label{eq:denoise_est}
	& \Psi_{\rm FP,1}(\Dc) \nonumber \\
    & =   \frac{ \sum\limits_{(\xs,\tilde{\ys})\in\Dc_1} \! \mathbbm{1} \{  \mathsf{dn}(\xs,\tilde{\ys}; \Dc_2) \geq 0.5 \} (1-\mathsf{dn}(\xs,\tilde{\ys}; \Dc_2))}{ \max\left\{ \epsilon, \sum_{(\xs,\tilde{\ys})\in\Dc_1} \mathbbm{1}\{ \mathsf{dn}(\xs ,\tilde{\ys}; \Dc_2) < 0.5 \} \right\}}, 
\end{align}
where $\epsilon >0$ is an arbitrarily small parameter.
Then, for any $r\in(0,1)$, $\Psi_{\rm FP,1}(\Dc)$ is a consistent estimator of the FPR.
\end{theorem}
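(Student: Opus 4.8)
The plan is to reduce the statement to a combination of Theorem~\ref{thm:Noiseless2} and the consistency of the denoising map in Lemma~\ref{lem:denoise_1}, being careful about the fact that the denoised labels are themselves random and not independent. First I would fix $r\in(0,1)$ and observe that, since $\Xc$ is finite and $p_\cs(0)>0$, almost surely every $\xs\in\Xc$ appearing in $\Dc_1$ also appears in $\Dc_2$ with a count $n_{\xs}\to\infty$ as $n\to\infty$ (a Borel--Cantelli / law-of-large-numbers argument on the finitely many atoms). On that event, Lemma~\ref{lem:denoise_1} gives, for each fixed atom $\xv\in\Xc$, that $\mathsf{dn}(\xv,\cdot\,;\Dc_2)\overset{p}{\to} \ys(\xv):=p_{\cs|\xs}(1|\xv)$; since $\Xc$ is finite this upgrades to a uniform (over $\Xc$) convergence in probability, i.e. $\max_{\xv\in\Xc}|\mathsf{dn}(\xv,\cdot\,;\Dc_2)-\ys(\xv)|\overset{p}{\to}0$.

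Next I would compare $\Psi_{\rm FP,1}(\Dc)$ with the ``oracle'' estimator $\psi_{\rm FP,2}(\Dc_1)$ from~\eqref{eq:SecondFPREst} built on the \emph{true} soft labels $\ys_i=p_{\cs|\xs}(1|\xs_i)$ of the points in $\Dc_1$. Since $|\Dc_1|=r|\Dc|\to\infty$, Theorem~\ref{thm:Noiseless2} already tells us $\psi_{\rm FP,2}(\Dc_1)\overset{p}{\to}\rho_{\rm FP}(\phi_B)$, so it suffices to show $\Psi_{\rm FP,1}(\Dc)-\psi_{\rm FP,2}(\Dc_1)\overset{p}{\to}0$. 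The numerators differ by $\frac1{|\Dc_1|}\sum_{\Dc_1}\big(\mathbbm{1}\{\mathsf{dn}\ge0.5\}(1-\mathsf{dn})-\mathbbm{1}\{\ys\ge0.5\}(1-\ys)\big)$, which I would bound by splitting into the ``agreement'' term $\frac1{|\Dc_1|}\sum\mathbbm{1}\{\mathsf{dn}\ge0.5\}|\ys-\mathsf{dn}|$ — controlled by the uniform denoising bound above — plus the ``disagreement'' term supported on points where $\mathbbm{1}\{\mathsf{dn}\ge0.5\}\ne\mathbbm{1}\{\ys\ge0.5\}$. A similar decomposition handles the denominators. After dividing by $|\Dc_1|$ (and using the $\max\{\epsilon,\cdot\}$ truncation to keep denominators bounded away from $0$ on the good event where $\hat p_\cs(0)>0$, which again holds with probability $\to1$ since $p_\cs(0)>0$), a Slutsky-type argument finishes the proof.

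The main obstacle is the disagreement term: a data point can have its denoised label $\mathsf{dn}(\xs,\tilde\ys;\Dc_2)$ land on the opposite side of the threshold $0.5$ from its true soft label $\ys$, and such a flip contributes an $O(1)$ error to the corresponding indicator. To control the \emph{mass} of such points I would argue that a flip at $\xv$ requires $|\ys(\xv)-0.5|\le|\mathsf{dn}(\xv,\cdot;\Dc_2)-\ys(\xv)|$, which by the uniform bound is $\le \eta_n$ for some $\eta_n\overset{p}{\to}0$; hence flips only occur at atoms with $\ys(\xv)$ within $\eta_n$ of $0.5$. The total $p_{\xs}$-mass of $\{\xv:|\ys(\xv)-0.5|\le\eta\}$ tends to $\Pr(\ys=0.5)$ as $\eta\downarrow0$; on the boundary $\ys=0.5$ the threshold $\mathbbm{1}\{\cdot\ge0.5\}$ is continuous (it takes value $1$, and $1-\ys=0.5$), so the contributions $\mathbbm{1}\{\mathsf{dn}\ge0.5\}(1-\mathsf{dn})$ and $\mathbbm{1}\{\ys\ge0.5\}(1-\ys)$ are both $\approx 0.5$ there and their difference is small; combined with the empirical measure converging to $p_{\xs}$, the disagreement term vanishes in probability. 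I would write this step carefully since it is the only place where the specific form of the threshold function and the boundary behaviour at $\ys=0.5$ matter; everything else is bookkeeping with the law of large numbers on finitely many atoms.
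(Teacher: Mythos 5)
Your overall strategy — compare $\Psi_{\rm FP,1}(\Dc)$ to the oracle $\psi_{\rm FP,2}(\Dc_1)$ built from the true soft labels, invoke Theorem~\ref{thm:Noiseless2} for the oracle, and then show the denoising perturbation is asymptotically negligible — is a legitimate alternative to the paper's route, which instead replaces $\mathsf{dn}(\xs_i,\tilde{\ys}_i;\Dc_2)$ by $\ys_i$ inside the sums in the limit (via Lemma~\ref{lem:denoise_1} plus the continuous mapping theorem) and then applies the law of large numbers directly to the denoised numerator and denominator. Your split into an ``agreement'' term, controlled by the uniform-over-finitely-many-atoms convergence of $\mathsf{dn}$, and a ``disagreement'' term is also the right bookkeeping, and your observation that the disagreement set is confined to $\{\xv : |\ys(\xv)-0.5|\le\eta_n\}$ for some $\eta_n\overset{p}{\to}0$ is correct.

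The gap is in how you dismiss the disagreement term. You claim that at $\ys=0.5$ the two contributions $\mathbbm{1}\{\mathsf{dn}\ge0.5\}(1-\mathsf{dn})$ and $\mathbbm{1}\{\ys\ge0.5\}(1-\ys)$ are ``both $\approx 0.5$ and their difference is small,'' on the grounds that $\mathbbm{1}\{\cdot\ge0.5\}$ is continuous there. This is false: $t\mapsto\mathbbm{1}\{t\ge0.5\}(1-t)$ has a genuine jump at $t=0.5$, with left limit $0$ and right-continuous value $0.5$. If $\ys(\xv)=0.5$ and $\mathsf{dn}(\xv,\cdot;\Dc_2)$ lands just below $0.5$, the first contribution is $0$, not $\approx0.5$, so the per-sample error is $\Theta(1)$, not $o(1)$. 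What actually makes the disagreement term vanish is not smallness of the per-sample error but smallness of the mass on which it can occur: you need $\Pr(\ys=0.5)=0$ (equivalently, no atom $\xv$ with $\ys(\xv)=0.5$ and $\Pr(\xs=\xv)>0$), after which ``$O(1)$ error $\times$ $o_p(1)$ fraction of $\Dc_1$'' completes the argument. Note that if $\Pr(\ys=0.5)>0$ the claim genuinely fails: for such an atom the denoised value concentrates near $0.5$ but falls on either side with asymptotically equal probability, so $\mathbbm{1}\{\mathsf{dn}\ge0.5\}(1-\mathsf{dn})$ oscillates and contributes a non-degenerate fluctuation to the estimator. The paper's own proof also implicitly relies on this: its appeal to the continuous mapping theorem cites $\Pr(\mathsf{dn}(\xs_i,\tilde{\ys}_i;\Dc_2)=0.5)=0$, but what CMT requires is that the \emph{limit} $\ys_i$ avoid the discontinuity point, i.e.\ $\Pr(\ys=0.5)=0$. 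So your plan is salvageable; replace the ``both $\approx0.5$'' step with the mass-vanishing argument and state the needed assumption.
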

    
\begin{proof}
See Section~\ref{sec:proof_dn_estimate}.
\end{proof}

Theorem~\ref{thm:denoising_consistency} shows the effectiveness of $\Psi_{\rm FP,1}(\Dc)$ in~\eqref{eq:denoise_est} in estimating the FPR given a noisy labeled dataset, when the feature space is finite and the noisy labels are continuous. 

However, $\Psi_{\rm FP,1}(\Dc)$ in~\eqref{eq:denoise_est} also suffers from some limitations. For instance, if the number of data samples is not sufficiently large, $\Psi_{\rm FP,1}(\Dc)$ in~\eqref{eq:denoise_est} may result in a poor estimate of the FPR.
Moreover, the finite sample space assumption does not hold for some practical problems, especially when features have continuous values.
To counter these limitations, we pose the denoising problem as a non-parametric estimation of the conditional expectation in~\eqref{eq:denoising_expectation2}. In particular, we reconstruct the soft label by taking the local average around $\xs=\xs_i$, using the Nadaraya-Watson (NW) estimator~\cite{nadaraya1964estimating,watson1964smooth}, which is formally defined below. 
\begin{definition}\label{def:NW}
Let $\Dc = \tilde{\Dc} \cup (\xs_i,\tilde{\ys}_i)$ with $(\xs_i,\tilde{\ys}_i) \notin \tilde{\Dc}$.
The NW estimator~\cite{nadaraya1964estimating,watson1964smooth} of $\EE[\tilde{\ys} | \xs ]$ at $(\xs_i,\tilde{\ys}_i)$ is given~by
\begin{align}\label{eq:nw}
    \mathsf{nw}(\xs_i,\tilde{\ys}_i;\tilde{\Dc})
     = \frac{\sum_{(\xs,\tilde{\ys})\in\Dc} K\!\left(\frac{d(\xs_i,\xs)}{h}\right)\tilde{\ys}}{\sum_{(\xs,\tilde{\ys})\in\Dc} K\!\left(\frac{d(\xs_i,\xs)}{h}\right)} ,
\end{align}
where $d(\cdot,\cdot):\Xc\times \Xc \to \RR_+$ is a metric, $h$ is the bandwidth, and $K(\cdot)$ is a kernel supported on $[0,1]$ satisfying the following conditions: 1) it is strictly decreasing; 2) it is Lipschitz continuous; and 3) $\exists \theta, \forall t\in[0,1], 0<\theta<-K^\prime(t)$.
\end{definition}
With the NW estimator in~\eqref{eq:nw} of the soft labels, we can now effectively estimate the FPR by leveraging the estimator in~\eqref{eq:SecondFPREst} with the NW estimator. The next theorem shows its consistency under some assumptions.
\begin{theorem}\label{thm:NW_consistency}
Assume that:
\begin{enumerate}
    \item $\Xc$ is a compact subset of the metric space $\RR^m$ with the metric $d$ in Definition~\ref{def:NW};
    \item H\"older condition: there exist $C<\infty$ and $\beta>0$ such that for all $(\xs_i,\xs_j)\in\Xc^2$, $|p_{\cs|\xs}(1|\xs_i) - p_{\cs|\xs}(1|\xs_j)| \leq C d(\xs_i,\xs_j)^\beta$;
    \item $\Dc$ is a dataset with continuous noisy labels bounded as $\tilde{\ys}\in[a,b]$ with finite $a$ and $b$.
\end{enumerate}
Consider the following estimator:
\begin{align}\label{eq:NW_est}
	& \Psi_{\rm FP,2}(\Dc)  \nonumber \\
    & =  \frac{\sum\limits_{(\xs,\tilde{\ys})\in\Dc_1} \!\!\! \mathbbm{1} \{  \mathsf{nw}(\xs,\tilde{\ys};\Dc_2) \geq 0.5 \} (1 - \mathsf{nw}(\xs,\tilde{\ys};\Dc_2))}{ \max\left\{ \epsilon, \sum_{(\xs,\tilde{\ys})\in\Dc_1} \mathbbm{1}\{ \mathsf{nw}(\xs,\tilde{\ys};\Dc_2) < 0.5 \} \right\}} , 
\end{align}
where $\epsilon >0$ is an arbitrarily small parameter, and $\Dc_1$ and $\Dc_2$ form an arbitrary partition of $\Dc$ with ratio 
$r=\frac{|\Dc_1|}{|\Dc|}$.
Then, when 
$n\to \infty$ and $h\to0$ with $\frac{\ln n}{nh^m}\to0$,
it holds that $\Psi_{\rm FP,2}(\Dc)$ in~\eqref{eq:NW_est} is a consistent estimator of $\rho_{\rm FP}(\phi_B)$.
\end{theorem}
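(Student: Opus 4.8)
The plan is to rerun the argument behind Theorem~\ref{thm:Noiseless2} with the true soft label $\ys$ replaced by its Nadaraya--Watson reconstruction $\mathsf{nw}(\xs,\tilde{\ys};\Dc_2)$, controlling the reconstruction error by a standard kernel-regression consistency bound. Write $g(t):=\mathbbm{1}\{t\geq 0.5\}(1-t)$, so that the numerator of $\Psi_{\rm FP,2}(\Dc)$ in~\eqref{eq:NW_est} is $\sum_{(\xs,\tilde{\ys})\in\Dc_1}g(\mathsf{nw}(\xs,\tilde{\ys};\Dc_2))$ and the quantity inside the $\max$ in its denominator is $\sum_{(\xs,\tilde{\ys})\in\Dc_1}\mathbbm{1}\{\mathsf{nw}(\xs,\tilde{\ys};\Dc_2)<0.5\}$. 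I would show that the two normalized sums $\frac{1}{|\Dc_1|}\sum_{\Dc_1}g(\mathsf{nw})$ and $\frac{1}{|\Dc_1|}\sum_{\Dc_1}\mathbbm{1}\{\mathsf{nw}<0.5\}$ converge in probability to $\EE[\mathbbm{1}\{\ys\geq 0.5\}(1-\ys)]$ and to $\Pr(\ys<0.5)=p_\cs(0)$, respectively. Since $p_\cs(0)>0$ by~\eqref{eq:PcDefinition}, the $|\Dc_1|$-sum in the denominator diverges, so the $\max\{\epsilon,\cdot\}$ is inactive with probability tending to one, and the continuous-mapping theorem then gives $\Psi_{\rm FP,2}(\Dc)\overset{p}{\to}\frac{1}{p_\cs(0)}\EE[\mathbbm{1}\{\ys\geq 0.5\}(1-\ys)]=\rho_{\rm FP}(\phi_B)$, the last equality being~\eqref{eq:MotivationEstimator}. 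Everywhere I would condition on $\Dc_2$: given $\Dc_2$, the summands over $\Dc_1$ are i.i.d.\ (each a fixed function of one i.i.d.\ pair, with $\Dc_1$ independent of $\Dc_2$), so laws of large numbers apply once the per-sample bias is shown to vanish.

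The first main step is a consistency bound for the NW estimator: under Assumptions~1--2 and the regime $h\to 0$ with $\tfrac{\ln n}{n h^m}\to 0$, one has $\Delta_n:=\sup_{\xs}\lvert\mathsf{nw}(\xs,\tilde{\ys};\Dc_2)-p_{\cs|\xs}(1|\xs)\rvert\overset{p}{\to}0$, the supremum being over the support of $p_{\xs}$ (points outside it enter $\Dc_1$ with probability zero). This is the classical bias/variance analysis of kernel regression: the regression function of $\tilde{\ys}$ on $\xs$ is $\EE[\tilde{\ys}\mid\xs]=\EE[\ys\mid\xs]=p_{\cs|\xs}(1|\xs)$, by the zero-mean noise assumption and the Markov chain $\xs\to\ys\to\tilde{\ys}$; since $K$ is supported on $[0,1]$, only samples with $d(\xs,\cdot)\leq h$ enter each weighted average, so by Assumption~2 the bias is $O(h^\beta)\to 0$; the stochastic fluctuation of the self-normalized average of the bounded variables $\tilde{\ys}\in[a,b]$ is controlled by an exponential (Hoeffding/Bernstein-type) inequality, which a covering argument on the compact set $\Xc$ --- using the Lipschitz continuity and strict monotonicity of $K$, and the condition $0<\theta<-K'(t)$ that guarantees enough effective neighbors so the normalizing denominator is not too small --- upgrades to the uniform statement, the union bound over the cover costing the logarithmic factor absorbed by $\tfrac{\ln n}{n h^m}\to 0$. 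I expect this lemma to be the most technical part of the proof.

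The second main step transfers NW consistency to the FPR estimator despite the discontinuity of $t\mapsto\mathbbm{1}\{t\geq 0.5\}$. For every sample one has the elementary deterministic bounds $\lvert g(\mathsf{nw})-g(\ys)\rvert\leq\Delta_n+\tfrac12\mathbbm{1}\{\lvert\ys-0.5\rvert\leq\Delta_n\}$ and $\lvert\mathbbm{1}\{\mathsf{nw}<0.5\}-\mathbbm{1}\{\ys<0.5\}\rvert\leq\mathbbm{1}\{\lvert\ys-0.5\rvert\leq\Delta_n\}$, because the thresholded decisions can disagree only within distance $\Delta_n$ of the Bayes boundary $\{\ys=0.5\}$. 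Averaging over $\Dc_1$, each discrepancy is at most $\Delta_n+\frac{1}{|\Dc_1|}\sum_{\Dc_1}\mathbbm{1}\{\lvert\ys_i-0.5\rvert\leq\Delta_n\}$; fixing $\eta>0$, on the event $\{\Delta_n<\eta\}$ (of probability tending to one) this is $\leq\eta+\frac{1}{|\Dc_1|}\sum_{\Dc_1}\mathbbm{1}\{\lvert\ys_i-0.5\rvert<\eta\}$, and the last average --- a plain law of large numbers over $\Dc_1$, independent of $\Dc_2$ and hence of $\Delta_n$ --- tends to $\Pr(\lvert\ys-0.5\rvert<\eta)$. Letting $\eta\downarrow 0$, the bound vanishes provided the Bayes decision boundary carries no mass, i.e.\ $\Pr(\ys=0.5)=0$; this is the only point at which a mild extra regularity assumption is needed, and it can alternatively be absorbed into the arbitrary tie-break in~\eqref{eq:bayes_classifier}. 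Combining with the plain limits $\frac{1}{|\Dc_1|}\sum_{\Dc_1}g(\ys_i)\overset{p}{\to}\EE[\mathbbm{1}\{\ys\geq 0.5\}(1-\ys)]$ and $\frac{1}{|\Dc_1|}\sum_{\Dc_1}\mathbbm{1}\{\ys_i<0.5\}\overset{p}{\to}p_\cs(0)$, and with the Slutsky step from the first paragraph, yields the theorem. The main obstacles are the uniform NW consistency lemma and this control of the thresholding near the decision boundary.
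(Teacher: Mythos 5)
Your proposal follows essentially the same two-step structure as the paper's proof: first establish uniform (in $\xs$) consistency of the Nadaraya--Watson estimate of $\EE[\tilde{\ys}\mid\xs]=p_{\cs|\xs}(1|\xs)$, then feed this into the argument used for the finite-$\Xc$ denoiser (Theorem~\ref{thm:denoising_consistency}): split into numerator and denominator, normalize by $|\Dc_1|$, apply the law of large numbers, and finish with the continuous-mapping theorem and~\eqref{eq:MotivationEstimator}. The difference in the first step is one of bookkeeping: the paper simply cites Corollary~4.3 of~\cite{ferraty2004nonparametric} (Lemma~\ref{lem:nw_convergence}), verifying its hypotheses from Assumptions~1--3 (boundedness of $\tilde{\ys}$ handles the moment conditions, and $\Xc$ being the support of $\xs$ gives the density lower bound), whereas you reprove the uniform rate from a bias/variance plus covering argument. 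Both are fine; the citation is obviously shorter and is the route the paper takes.

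Where you genuinely depart from (and improve on) the paper is the second step, specifically the treatment of the discontinuity of $t\mapsto\mathbbm{1}\{t\geq 0.5\}$. The paper, in the proof of Theorem~\ref{thm:denoising_consistency} (which Theorem~\ref{thm:NW_consistency} invokes by analogy), justifies the continuous-mapping step via ``$\Pr(\mathsf{dn}(\xs_i,\tilde{\ys}_i;\Dc_2)=0.5)=0$ because $\tilde{\ys}$ is continuous.'' But the continuous-mapping theorem requires the discontinuity set of $g$ to be null under the \emph{limit} law, i.e.\ $\Pr(\ys=0.5)=0$, not under the law of the estimator. Continuity of the noise $\tilde{\ys}$ says nothing about whether the deterministic quantity $\ys=p_{\cs|\xs}(1|\xs)$ equals $0.5$ with positive probability; if it does, $g(\mathsf{nw})$ need not converge. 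Your deterministic bound $|g(\mathsf{nw})-g(\ys)|\leq\Delta_n+\tfrac12\mathbbm{1}\{|\ys-0.5|\leq\Delta_n\}$ followed by shrinking $\eta$ isolates exactly this issue, and your observation that $\Pr(\ys=0.5)=0$ (or a re-interpretation of the tie-break in~\eqref{eq:bayes_classifier}) is what is actually required is correct and sharper than what the paper states. Your argument also handles the double limit more carefully: the paper's proof takes the per-term limit $\mathsf{dn}\to\ys_i$ and then applies the LLN over $\Dc_1$, but both $|\Dc_1|$ and $|\Dc_2|$ grow with $n$ simultaneously, so the clean route is your one --- a uniform (over $\xs$) error bound conditional on $\Dc_2$, then an LLN over $\Dc_1$ independent of $\Dc_2$. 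In short: same decomposition and same key lemma, but your near-threshold analysis fills a genuine imprecision in the paper's continuous-mapping step.
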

\begin{proof}
See Section~\ref{sec:NW_cons_proof}.
\end{proof}


We conclude this section with the next example, which verifies that Theorem~\ref{thm:NW_consistency} is a generalized version of Theorem~\ref{thm:denoising_consistency}.
\begin{example}
Let us choose the metric $d$ as follows,
\begin{align}\label{eq:metric_recover_dn}
    d(\us,\vs)
    & = \begin{cases}
        0 & \text{ if } \us = \vs, \\
        1 & \text{ if } \us \neq \vs.
    \end{cases}
\end{align}
If we choose $K(t) = 1-t$ for all $t\in[0,1]$ and $h=1$, the NW estimator in~\eqref{eq:nw} with the metric $d$ in~\eqref{eq:metric_recover_dn} retrieves the denoising method $\mathsf{dn}(\cdot,\cdot;\cdot)$ in~\eqref{eq:denoise}.
Since a finite set is compact and the H\"older condition holds with the metric $d$ in~\eqref{eq:metric_recover_dn}, Theorem~\ref{thm:NW_consistency} shows the consistency of $\Psi_{\rm FP,1}(\Dc)$ in~\eqref{eq:denoise_est}.
\end{example}

\section{Proofs of Main Results}

\subsection{Proof of Theorem~\ref{thm:NoiselessIdeal}}
\label{sec:ProofThmNoiselessIdeal}

1) {\em (Unbiasedness).} We have that
\begin{align}\label{eq:base_est_proof1}
	\EE[\psi_{\rm FP,1}(\Dc)] \nonumber
	& \stackrel{{\rm{(a)}}}{=} \EE\left[\frac{1}{n p_\cs(0)} \sum_{(\xs,\ys)\in\Dc} \mathbbm{1} \{  \ys \geq 0.5 \} (1-\ys) \right] \nonumber\\
	& \stackrel{{\rm{(b)}}}{=} \frac{1}{p_\cs(0)}  \EE[\mathbbm{1} \{  \ys \geq 0.5 \} (1-\ys)] \nonumber\\
 & \stackrel{{\rm{(c)}}}{=} \rho_{\rm{FP}}(\phi_{B}),
\end{align}
where the labeled equalities follow from:
$\rm{(a)}$ using the expression of $\psi_{\rm FP,1}(\Dc)$ in~\eqref{eq:IdealEstimator};
$\rm{(b)}$ the i.i.d. assumption on the data samples;
and $\rm{(c)}$ using~\eqref{eq:MotivationEstimator}.

2) {\it (Convergence rate).}
Since $\mathbbm{1}\{\ys\geq 0.5\} (1-\ys) \in [0,0.5]$, the Hoeffding's inequality yields
\begin{align}
	\Pr( | \psi_{\rm FP,1}(\Dc) - \rho_{\rm{FP}}(\phi_B) | \geq t )
	& \leq 2 {\rm{e}}^{-\frac{2 n^2   t^2 p_\cs^2(0)}{\sum_{i=1}^n (0.5)^2 }} \nonumber  \\
	& = 2 {\rm{e}}^{- 8 n   t^2 p_\cs^2(0)} .
\end{align}
For any $\delta\in (0,1)$, let $t = \sqrt{ \frac{1}{8 n p_\cs^2(0)} \ln(\frac{2}{\delta})} $. Then, we obtain
\begin{align}
	\Pr\left( | \psi_{\rm FP,1}(\Dc) - \rho_{\rm{FP}}(\phi_B) | \geq \sqrt{ \frac{\ln(\frac{2}{\delta})}{8 n p_\cs^2(0)} } \right)
	& \leq \delta,
\end{align}
which implies that for any $\delta\in(0,1)$, the following holds
\begin{align}
	 | \psi_{\rm FP,1}(\Dc) - \rho_{\rm{FP}}(\phi_B) | 
	 < \sqrt{ \frac{\ln(\frac{2}{\delta})}{8 n p_\cs^2(0)} },
\end{align}
with probability at least $1-\delta$.

3) {\em (Variance).}
We first observe that
\begin{align}
	\var(\psi_{\rm FP,1}(\Dc)) \nonumber
	& = \var\left( \frac{1}{n p_\cs(0)} \sum_{(\xs,\ys)\in\Dc} \mathbbm{1} \{  \ys \geq 0.5 \} (1-\ys) \right) \\
	& = \frac{\var( \mathbbm{1} \{  \ys \geq 0.5 \} (1-\ys) )}{n p_\cs^2(0)} ,
\end{align}
where the last equality follows from the i.i.d. assumption on the data samples. Since $\mathbbm{1} \{  \ys \geq 1-\ys \} (1-\ys) \in [0,0.5]$, applying Popoviciu's inequality on variances~\cite{bhatia2000better} yields
\begin{align}
    \var(\psi_{\rm FP,1}(\Dc))
    & \leq \frac{1}{16n p_\cs^2(0)}.
\end{align}

4) {\em (Asymptotic normality).} The estimator $\psi_{\rm FP,1}(\Dc)$ in~\eqref{eq:IdealEstimator} is the sample mean of $\frac{\mathbbm{1}\{\ys\geq 1-\ys\} (1-\ys)}{p_{\cs}(0)}$. 
Hence, due to the central limit theorem, we obtain that as $n\to\infty$,
\begin{align}
	\sqrt{n}(\psi_{\rm FP,1}(\Dc) - \rho_{{\rm{FP}}}(\phi_B))
	& \overset{d}{\to} \Nc(0,\var(\psi_{\rm FP,1}(\Dc))).
\end{align}
This concludes the proof of Theorem~\ref{thm:NoiselessIdeal}.

\subsection{Proof of Theorem~\ref{thm:Noiseless2}}
\label{sec:ProofThmNoiseless2}
Recall that $\Dc = \{ (\xs_i,\ys_i) \}_{i=1}^n$ being distributed according to an unknown data distribution $p_{\xs,\ys}$, i.e., $(\xs_i,\ys_i) \overset{i.i.d.}{\sim} p_{\xs,\ys}$.

Our goal is to show that as $n\to\infty$, it holds that
\begin{equation}
\label{eq:GoalCons2}
\psi_{\rm FP,2}(\Dc) \overset{p}{\to} \rho_{\rm FP}(\phi_B).
\end{equation}
We start by analyzing the numerator of $\psi_{\rm FP,2}(\Dc)$ in~\eqref{eq:SecondFPREst}. Multiplying the numerator by $1/n$, we have that
\begin{align} 
&\lim_{n \to \infty} \frac{1}{n}\sum_{i=1}^n \mathbbm{1} \{  \ys_i \geq 0.5 \} (1-\ys_i) \nonumber
\\& = \EE[ \mathbbm{1} \{  p_{\cs|\xs}(1|\xs) \geq  0.5 \}  p_{\cs|\xs}(0|\xs) ]\nonumber
\\& = p_{\cs}(0) \rho_{\rm FP}(\phi_B),
\label{eq:NumPhi2}
\end{align}
where the first equality follows from the law of large numbers and the last equality uses~\eqref{eq:MotivationEstimator}.
We now analyze the denominator of $\psi_{\rm FP,2}(\Dc)$ in~\eqref{eq:SecondFPREst}.
Before taking the limit, since $\epsilon>0$ is arbitrarily small, we set $\epsilon={\rm{e}}^{-n}$. Multiplying the denominator by $1/n$, we have that
\begin{align}
\lim_{n \to \infty} \max \left \{ \frac{{\rm{e}}^{-n}}{n} , \sum_{i =1}^n \frac{\mathbbm{1}\{ \ys_i \!<\! 0.5 \}}{n} \right \} \nonumber
& = \max \left \{  0,\Pr(\ys \!<\! 0.5 ) \right \}\nonumber
\\ &  = p_\cs(0),
\label{eq:DenPhi2}
\end{align}
where the first equality follows from the law of large numbers and the last equality follows from~\eqref{eq:PcDefinition}.  
Since the denominator of $\psi_{\rm FP,2}(\Dc)$ multiplied by $1/n$ is at least $\frac{\epsilon}{n}>0$, we can use the continuous mapping theorem~\cite{continuous_mapping},
which gives that
\begin{align}
    \psi_{\rm FP,2}(\Dc)
    & \overset{p}{\to} \rho_{\rm{FP}}(\phi_B)
\end{align}
as $n\to\infty$,
where we have used~\eqref{eq:NumPhi2} and~\eqref{eq:DenPhi2}.
This shows~\eqref{eq:GoalCons2} and concludes the proof of Theorem~\ref{thm:Noiseless2}.

\subsection{Proof of Theorem~\ref{thm:denoising_consistency}}\label{sec:proof_dn_estimate}
Our goal is to show that as $n\to\infty$, it holds that
\begin{equation}
\label{eq:GoalConsNoisy}
\Psi_{\rm FP,1}(\Dc) \overset{p}{\to} \rho_{\rm FP}(\phi_B).
\end{equation}
With the ratio $r=\frac{|\Dc_1|}{|\Dc|}$, we have that $|\Dc_1| = rn$ and $|\Dc_2|=(1-r)n$.
We index the data samples in $\Dc_1$ with $i$ and those in $\Dc_2$ with $j$, i.e., $\Dc_1 = \{(\xs_i,\tilde{\ys}_i): i\in[rn]\}$ and $\Dc_2 = \{(\xs_j,\tilde{\ys}_j):j\in[(1-r)n)]\}$.
We start by analyzing the numerator of $\Psi_{\rm FP,1}(\Dc)$ in~\eqref{eq:denoise_est}. Multiplying the numerator by $1/rn$, we have that
\begin{align}
    & \lim_{n\to\infty}\frac{1}{rn}\sum_{i=1}^{rn} \mathbbm{1} \{  \mathsf{dn}(\xs_i,\tilde{\ys}_i; \Dc_2) \geq 0.5 \} (1-\mathsf{dn}(\xs_i,\tilde{\ys}_i; \Dc_2)) \nonumber\\
    & \overset{\rm (a)}{=} \lim_{n\to\infty}\frac{1}{nr}\sum_{i=1}^{nr} \mathbbm{1} \{  \ys_i \geq 0.5 \} (1-\ys_i) \nonumber\\
    & \overset{\rm (b)}{=} \EE[ \mathbbm{1} \{  \ys \geq 0.5 \} (1-\ys) ],
    \label{eq:NumDenoiseEst}
\end{align}
where the labeled equalities follow from: 
$\rm (a)$ the fact that $\Dc_1\cap\Dc_2 = \varnothing$ and leveraging Lemma~\ref{lem:denoise_1} with the continuous mapping theorem~\cite{continuous_mapping} that is verifiable with the facts that $\mathbbm{1}\{t\geq0.5\}(1-t)$ has only one discontinuity at $t=0.5$ and $\Pr(\mathsf{dn}(\xs_i,\tilde{\ys}_i;\Dc_2) = 0.5) = 0$ because $\tilde{\ys}$ is a continuous random variable; 
and $\rm (b)$ the fact that the soft labels $\ys_i$ are i.i.d. and using the law of large numbers.

We now analyze the denominator of $\Psi_{\rm FP,1}(\Dc)$ in~\eqref{eq:denoise_est}. Since $\epsilon$ is an arbitrarily small parameter, we set $\epsilon = {\rm{e}}^{-rn}$.
By multiplying the denominator by $1/rn$, we have that
\begin{align}
    & \lim_{n\to\infty} \max\left\{ \frac{{\rm{e}}^{-rn}}{rn}, \frac{1}{rn}\sum_{i=1}^{rn} \mathbbm{1}\{ \mathsf{dn}(\xs_i,\tilde{\ys}_i; \Dc_2) < 0.5 \} \right\} \nonumber \\
    & = \max\left\{ 0, \EE[ \mathbbm{1}\{ \ys < 0.5 \} ] \right\} \nonumber \\
    & =  p_\cs(0) ,
    \label{eq:DenDenoiseEst}
\end{align}
where the first equality follows by using similar steps as in~\eqref{eq:NumDenoiseEst} and the last equality follows from~\eqref{eq:PcDefinition}.

Since the denominator of $\Psi_{\rm FP,1}(\Dc)$ is non-zero, we can use the continuous mapping theorem~\cite{continuous_mapping}, which gives that 
\begin{align}
    \Psi_{\rm FP,1}(\Dc)
    & \overset{p}{\to} \frac{\EE[ \mathbbm{1} \{  \ys \geq 0.5 \} (1-\ys) ]}{p_\cs(0)} \nonumber \\
    & = \rho_{\rm{FP}}(\phi_B),
\end{align}
as $n\to\infty$, where we have put together~\eqref{eq:NumDenoiseEst} and~\eqref{eq:DenDenoiseEst}, and the last equality follows from~\eqref{eq:MotivationEstimator}.
This shows~\eqref{eq:GoalConsNoisy} and concludes the proof of Theorem~\ref{thm:denoising_consistency}.

\subsection{Proof of Theorem~\ref{thm:NW_consistency}}
\label{sec:NW_cons_proof}
The proof leverages the following lemma from~\cite{ferraty2004nonparametric}.
\begin{lemma}[Corollary~4.3 in~\cite{ferraty2004nonparametric}]\label{lem:nw_convergence}
Assume that:
\begin{enumerate}
    \item $\Xc$ is a compact subset of the metric space $\RR^m$ with the metric $d$ in Definition~\ref{def:NW};
    \item H\"older condition: there exist $C<\infty$ and $\beta>0$ such that for all $(\xs_i,\xs_j)\in\Xc^2$, $|p_{\cs|\xs}(1|\xs_i) - p_{\cs|\xs}(1|\xs_j)| \leq C d(\xs_i,\xs_j)^\beta$;
    \item Bounded second moment of the noisy label: $\EE[\tilde{\ys}^2] < \infty$;
    \item $\sup_{i\neq j} \EE[|\tilde{\ys}_i \tilde{\ys}_j| \mid \xs_i,\xs_j] \leq M < \infty$;
    \item There exists $\kappa>0$ such that $\inf_{\xs\in\Xc} p_\xs(\xs)\geq \kappa$, where $p_\xs = \frac{d F_\xs}{d \mu}$ is the Radon-Nikodym derivative of the distribution function $F_\xs$ with respect to the Lebesgue measure $\mu$ on $\RR^m$.
\end{enumerate}
Then, it holds that
\begin{align}
    \sup_{\xs\in\Xc} | \ys - \mathsf{nw}(\xs,\tilde{\ys};\tilde{\Dc})| = O(h^\beta) + O\left(\sqrt{\frac{\ln n}{nh^m}}\right) , a.s.
\end{align}
\end{lemma}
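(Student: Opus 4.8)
The plan is to prove the stated uniform a.s.\ rate by the classical bias/variance decomposition for kernel regression estimators, with uniformity over $\Xc$ produced by a covering argument. Write $m(\xs) := p_{\cs|\xs}(1|\xs)$, so that $m(\xs)$ is exactly the soft label $\ys$ appearing in the statement and $\EE[\tilde\ys\mid\xs] = m(\xs)$ by the zero-mean noise assumption together with the Markov chain $\xs\to\ys\to\tilde\ys$. Introduce the kernel-weighted sums $\hat f_n(\xs) = \frac{1}{n h^m}\sum_{(\xs_j,\tilde\ys_j)\in\tilde\Dc} K\!\big(\tfrac{d(\xs,\xs_j)}{h}\big)$ and $\hat g_n(\xs) = \frac{1}{n h^m}\sum_{(\xs_j,\tilde\ys_j)\in\tilde\Dc} K\!\big(\tfrac{d(\xs,\xs_j)}{h}\big)\tilde\ys_j$, so that $\mathsf{nw}(\xs,\tilde\ys;\tilde\Dc) = \hat g_n(\xs)/\hat f_n(\xs)$ (up to a negligible self-term) and
\[
\mathsf{nw}(\xs,\tilde\ys;\tilde\Dc) - m(\xs) \;=\; \frac{B_n(\xs) + S_n(\xs)}{\hat f_n(\xs)},
\]
where $B_n(\xs) = \frac{1}{nh^m}\sum_j K\!\big(\tfrac{d(\xs,\xs_j)}{h}\big)\big(m(\xs_j)-m(\xs)\big)$ and $S_n(\xs) = \frac{1}{nh^m}\sum_j K\!\big(\tfrac{d(\xs,\xs_j)}{h}\big)\big(\tilde\ys_j-m(\xs_j)\big)$. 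It then suffices to show: (i) $\inf_{\xs\in\Xc}\hat f_n(\xs)\ge c>0$ eventually, a.s.; (ii) $\sup_{\xs\in\Xc}|B_n(\xs)|/\hat f_n(\xs) = O(h^\beta)$; and (iii) $\sup_{\xs\in\Xc}|S_n(\xs)| = O(\sqrt{\ln n/(nh^m)})$ a.s.

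Part (ii) is immediate from the kernel support: the nonzero summands of $B_n(\xs)$ have $d(\xs,\xs_j)\le h$, so the H\"older condition gives $|m(\xs_j)-m(\xs)|\le C d(\xs,\xs_j)^\beta\le C h^\beta$, whence $|B_n(\xs)|\le C h^\beta\hat f_n(\xs)$. For part (i), the density lower bound $\inf_\xs p_\xs\ge\kappa$, the fact that $d$-balls of radius $h$ in $\RR^m$ have Lebesgue measure of order $h^m$, and the kernel condition $-K'>\theta>0$ (which forces $\int_{\|v\|\le1}K(\|v\|)\,dv =: c_K>0$) together give $\EE[\hat f_n(\xs)]\ge\kappa c_K + o(1)$ uniformly in $\xs$ (modulo the usual boundary correction near $\partial\Xc$); combined with $\sup_{\xs\in\Xc}|\hat f_n(\xs)-\EE[\hat f_n(\xs)]| = o(1)$ a.s.\ --- which follows from the same covering argument used for (iii), applied to the bounded summands $K(d(\xs,\xs_j)/h)$ --- this yields $\inf_{\xs\in\Xc}\hat f_n(\xs)\ge\kappa c_K/2$ eventually a.s.

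Part (iii) is the core and the main obstacle. Conditionally on the features, $S_n(\xs)$ is an average of independent, mean-zero terms (as $\EE[\tilde\ys_j\mid\xs_j]=m(\xs_j)$); its typical magnitude is of order $(nh^m)^{-1/2}$, by a moment computation that uses the kernel support (each summand is supported on the event $\{d(\xs,\xs_j)\le h\}$, of probability $O(h^m)$) together with the moment conditions 3--4, which control the contribution of the tail of $\tilde\ys$. Fix $\lambda_n = A\sqrt{\ln n/(nh^m)}$. Since $\Xc$ is compact, cover it by $N_n = O(n^{qm})$ balls of radius $\delta_n = n^{-q}$ with centers $\xs^{(1)},\dots,\xs^{(N_n)}$, for a fixed large constant $q$. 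At each center one applies an exponential inequality for kernel-weighted sums --- a Bernstein inequality when the labels are bounded (as in the applications of the lemma here), or a Fuk--Nagaev-type inequality under the weaker moment conditions 3--4, which is the route of~\cite{ferraty2004nonparametric} --- to obtain $\Pr(|S_n(\xs^{(k)})| > \lambda_n/2)\le n^{-(qm+2)}$ for $A$ large; a union bound over the $N_n$ centers plus Borel--Cantelli then give $\max_k|S_n(\xs^{(k)})|\le\lambda_n/2$ eventually a.s. Finally, the oscillation of $S_n$ between an arbitrary $\xs$ and its nearest center is controlled by the Lipschitz continuity of $K$: the triangle inequality gives $|d(\xs,\xs_j)-d(\xs^{(k)},\xs_j)|\le d(\xs,\xs^{(k)})\le\delta_n$, hence $|K(d(\xs,\xs_j)/h)-K(d(\xs^{(k)},\xs_j)/h)|\le L\delta_n/h$, so $\sup_{\xs}|S_n(\xs)-S_n(\xs^{(k(\xs))})|\le\frac{L\delta_n}{h^{m+1}}\cdot\frac1n\sum_j|\tilde\ys_j-m(\xs_j)| = O(\delta_n/h^{m+1})$ a.s.; choosing $q$ large makes this $o(\lambda_n)$. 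Hence $\sup_{\xs\in\Xc}|S_n(\xs)| = O(\sqrt{\ln n/(nh^m)})$ a.s.

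Putting (i)--(iii) together, $\sup_{\xs\in\Xc}|\mathsf{nw}(\xs,\tilde\ys;\tilde\Dc)-m(\xs)|\le\frac{2}{\kappa c_K}\big(Ch^\beta + O(\sqrt{\ln n/(nh^m)})\big) = O(h^\beta) + O(\sqrt{\ln n/(nh^m)})$ a.s., which is the assertion. The delicate points are the uniform small-ball lower bound $\EE[\hat f_n(\xs)]\ge\kappa c_K + o(1)$ (notably the boundary behaviour of $\Xc$) and the exponential-inequality step under the moment conditions 3--4, arranged so that exactly a $\sqrt{\ln n}$ factor---and not a higher power---appears; both are standard but technical, which is why the statement is invoked from~\cite{ferraty2004nonparametric} rather than reproven here.
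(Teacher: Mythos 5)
The paper does not prove this lemma at all: it is imported verbatim as Corollary~4.3 of the cited Ferraty--Vieu reference and used as a black box in the proof of Theorem~\ref{thm:NW_consistency}, so there is no in-paper proof to compare against. What you wrote is the classical roadmap underlying results of this type, and in outline it matches the methodology of the cited source: split $\mathsf{nw}-m$ into a bias term and a centered stochastic term over the common denominator $\hat f_n$, kill the bias with the H\"older condition and the compact support of $K$, and obtain uniformity of the stochastic term from a covering of the compact set $\Xc$, an exponential inequality at the covering centers, Lipschitz control of the oscillation between centers, and Borel--Cantelli. For the bounded-label setting in which the paper actually invokes the lemma (Theorem~\ref{thm:NW_consistency} assumes $\tilde{\ys}\in[a,b]$), this outline is sound.

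As a proof of the lemma as stated, however, the two steps you flag at the end are genuine gaps rather than routine details. First, the exponential-inequality step is not available from conditions 3--4 alone: a finite second moment plus bounded conditional cross-moments do not yield Bernstein-type tails, and the truncation/Fuk--Nagaev machinery you merely name is precisely the technical content of the cited corollary, so in the unbounded case your part (iii) restates what must be proven. Second, your part (i) silently assumes that $d$-balls of radius $h$ have Lebesgue measure of order $h^m$, uniformly including near the boundary of $\Xc$; for the arbitrary metric $d$ allowed by Definition~\ref{def:NW} this is an additional hypothesis (it fails, for instance, for the discrete metric of the paper's own Example~1, where balls of radius $h<1$ are singletons), and the uniform lower bound on $\EE[\hat f_n(\xs)]$ is exactly where assumption 5 and the geometry of $\Xc$ must be used with care. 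Since you ultimately defer both points to the same citation the paper relies on, your proposal is best read as a correct and more informative roadmap of the cited proof rather than an independent derivation; it does not discharge the lemma on its own, but it is consistent with how the paper treats it.
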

Assumptions 1) and 2) in Lemma~\ref{lem:nw_convergence} are assumed to hold in Theorem~\ref{thm:NW_consistency}. 
Since the noisy label is bounded as $\tilde{\ys}\in[a,b]$, we have $\EE[\tilde{\ys}^2] \leq \max\{a^2,b^2\} < \infty$, which satisfies assumption~3) in Lemma~\ref{lem:nw_convergence}. The boundedness of $\tilde{\ys}$ also implies that $\EE[|\tilde{\ys}_i \tilde{\ys}_j| \mid \xs_i,\xs_j] \leq \max\{a^2, b^2\} < \infty$, which satisfies assumption 4) in Lemma~\ref{lem:nw_convergence}.
Since $\Xc$ is the support of $\xs$, assumption 5) in Lemma~\ref{lem:nw_convergence} holds. 

Leveraging Lemma~\ref{lem:nw_convergence}, we have that as $n\to\infty$ and $h\to0$ with $\frac{\ln n}{nh^m}\to0$, for any $\xs\in\Xc$, it holds that
\begin{align}
    \mathsf{nw}(\xs,\tilde{\ys};\tilde{\Dc}) \overset{p}{\to} \ys.
\end{align}
The proof of Theorem~\ref{thm:NW_consistency} is concluded by using the above fact and following the same steps as in the proof of Theorem~\ref{thm:denoising_consistency} (by only replacing $\mathsf{dn}(\xs,\tilde{\ys};\tilde{\Dc})$ with $\mathsf{nw}(\xs,\tilde{\ys};\tilde{\Dc})$).

\bibliographystyle{IEEEtran}
\bibliography{Bayes_error}

\end{document}